\newcommand{\q}{q}
\newcommand{\p}{p}
\newcommand{\e}{E}
\newcommand{\ind}{\perp \!\!\! \perp }
\newcommand{\doo}{\textnormal{do}}
\newtheorem{definition}{Definition}
\newtheorem{lemma}{Lemma}
\begin{document}

\twocolumn[

\aistatstitle{Causal Modeling with Stochastic Confounders}

\aistatsauthor{Thanh Vinh Vo\textsuperscript{1} \And Pengfei Wei\textsuperscript{1} \And  Wicher Bergsma\textsuperscript{2} \And Tze-Yun Leong\textsuperscript{1}}

\runningauthor{Thanh Vinh Vo, Pengfei Wei, Wicher Bergsma, Tze-Yun Leong}

\aistatsaddress{\textsuperscript{1}School of Computing, National University of Singapore\\ \textsuperscript{2}Department of Statistics, London School of Economics and Political Science} ]

\begin{abstract}

This work extends causal inference with stochastic confounders.  We propose a new approach to variational estimation for causal inference based on a representer theorem with a random input space. We estimate causal effects  involving latent confounders that may be interdependent and time-varying from sequential, repeated measurements in an observational study. Our approach extends current work that assumes independent, non-temporal latent confounders, with potentially biased estimators. We introduce a  simple yet elegant algorithm without parametric specification on model components. Our method avoids the need for expensive and careful parameterization in deploying complex models, such as deep neural networks, for causal inference in existing approaches. We demonstrate the effectiveness of our approach on various benchmark temporal datasets.

\end{abstract} %

\section{INTRODUCTION}

The study of causal effects of an intervention or treatment on a specific outcome based on observational data is a fundamental problem in many applications.  Examples include understanding the effects of pollution on health outcomes, teaching methods on a student’s employability, or disease outbreaks on the global stock market. 
A critical problem of causal inference from observational data is confounding. A variable that affects both the treatment and the outcome is known as a confounder of the treatment effects on the outcome.  Standard ways to deal with observable confounders include propensity score-based methods and their variants \citep{rubin2005causal}. However, if a confounder is hidden, the treatment effects on the outcome cannot be directly estimated without further assumptions \citep{pearl2009causal,louizos2017causal}. 
For example, household income, which cannot be easily measured, can affect both the therapy options available to a patient and the health condition after therapy of that patient. %

Recent studies in causal inference \citep{shalit2017estimating,louizos2017causal,madras2019fairness} mainly focus on static data, that is, the observational data has independent and identically distributed (\emph{iid}) noise, and time-independent. In many real-world applications, however, events change over time, e.g., each participant may receive an intervention multiple times and the timing of these interventions may differ across participants. In this case, the time-independent assumption does not hold, and causal inference in the models would degenerate as they fail to capture the nature of time-dependent data. In practice, temporal confounders such as seasonality and long-term trends
can partially contribute to confounding bias. 
For example, soil fertility can be considered as a confounder in crop planting and it may change over time due to different reasons such as annual rainfall or soil erosion. Whenever soil fertility declines (or raises), it would possibly affect the \textit{future} level of fertility. This motivates us to propose stochastic confounders to capture this variable over time. Moreover, if soil fertility is not recorded, it can be considered as a latent confounder. In real-life, there may be many more latent stochastic confounders and may in fact not be interpretable. Thus, these confounders should be taken into account to reduce bias in the estimates of causal effects.

In this work, we introduce a framework to characterize the latent confounding effects over time in causal inference based on the structural causal model (SCM) \citep{pearl2000causality}. Inspired by recent work \citep[e.g.,][]{riegg2008causal,louizos2017causal,madras2019fairness} that handle static, \emph{independent} confounder, we relax this assumption by modeling the confounder as a stochastic process. This approach generalizes the \emph{independent} setting to model confounders that have intricate patterns of interdependencies over time.

Many existing causal inference methods \citep[e.g.,][]{louizos2017causal,shalit2017estimating,madras2019fairness} exploit recent developments of deep neural networks. While effective, the performance of a neural network depends on many factors such as its structure (e.g., the number of layers, the number of nodes, or the activation function) or the optimization algorithm. Tuning a neural network is challenging; different conclusions may be drawn from different network settings. To overcome these challenges, we propose a nonparametric variational method to estimate the causal effects of interest using a kernel as a prior over the reproducing kernel Hilbert space (RKHS).
Our main contributions are summarized as follows:
\begin{itemize}[noitemsep,topsep=0pt,leftmargin=*,wide]
\item We introduce a temporal causal framework with a confounder process that captures the interdependencies of unobserved confounders. This relaxes the independence assumption in recent work  \citep[e.g.,][]{louizos2017causal,madras2019fairness}.
Under this setting, we introduce the concepts of causal path effects and intervention effects, and derive approximation measures of these quantities.
\item Our framework is robust and simple for accurately learning the relevant causal effects: given a time series, learning causal effects quantifies how an outcome is expected to change if we vary the treatment level or intensity. Our algorithm requires no information about how these variables are parametrically related, in contrast to the need of paramterizing a neural network. 
\item We develop a nonparametric variational estimator by exploiting the kernel trick in our temporal causal framework. This estimator has a major advantage: \textit{Complex non-linear functions can be used to modulate the SCM with estimated parameters that turn out to have analytical solutions}. We empirically demonstrate the effectiveness of the proposed estimators.

\end{itemize}
\section{BACKGROUND AND RELATED WORK}
\label{sec:background}

The structural causal model (SCM) of \citet{pearl1995causal,pearl2000causality} is a generic model developed on top of seminal works including structural equation models \citep{goldberger1972structural,goldberger1973structural,duncan1975,duncan2014introduction}, potential outcomes framework of \citet{neyman1923application} and \citet{rubin1974estimating}, and graphical models \citep{pearl23probabilistic}.
The SCM consists of a triplet: a set of exogenous variables whose values are determined by factors outside the framework, a set of endogenous variables whose values are determined by factors within the framework, and a set of structural equations that express the value of each endogenous variable as a function of the values of the other (endogenous and exogenous) variables.
Figure~\ref{fig:latent-confounder}~(a) shows a causal graph with endogenous variables $Y,\, Z,\, W.$ Here $Y$ is the outcome variable, $W$ is the intervention variable, and $Z$ is the confounder variable.
Exogenous variables are variables that are not affected by any other variables in the model, which are not explicitly in the graph.
Causal inference evaluates the effects of an intervention on the outcome, i.e., $\p(Y\,|\,\doo(W=w))$, the distribution of the outcome $Y$ induced by setting $W$ to a specific value $w$. 
Our work focuses on the problem of estimating causal effects, which is different from the works of identifying causal structure \citep{spirtes2000causation}, where  several approaches have been proposed, e.g.,  %
\citet{tian2001causal,peters2013causal,peters2014causal,jabbari2017discovery,huang2019causal}.

\textbf{Estimators with unobserved confounders.} The following efforts take into account the unobserved confounders in causal inference: \citet{montgomery2000measuring,riegg2008causal,de2017proxy,kuroki2014measurement,louizos2017causal,madras2019fairness}.
Specifically, some proxy variables are introduced to replace or infer the latent confounders.
For example, the household income of students is a confounder that affects the ability to afford private tuition and hence the academic performance; it may be difficult to obtain income information directly,
and proxy variables such as zip code, or education level are used instead.
Figures~\ref{fig:latent-confounder}~(b)~and~(c) present two causal graphs used by the recent causal inference algorithms in \citet{louizos2017causal,madras2019fairness}. 
The graphs contain latent confounder $Z$, proxy variable $X$, intervention $W$, outcome $Y$, and observed confounder $S$.

\textbf{Estimators with observed confounders.} \citet{Hill:2011,shalit2017estimating,Alaa:2017,Yao:2018,yoon:2018ganite,kunzel2019metalearners} follow the formalism of potential outcomes and these works do not take into account latent confounders but satisfy the strong ignorability assumption of \cite{rosenbaum1983central}. Of interest is the inference mechanism by \cite{Alaa:2017} where the authors model the counterfactuals as functions living in the reproducing kernel Hilbert space. In contrast, we work on a time series setting within a structural causal framework whose inference scheme directly benefits from the generalization of the empirical risk minimization framework. %

\textbf{Estimators for temporal data.} Little attention has been paid to learning causal effects on non-\emph{iid} setting \citep{guo2018survey,bica2020time}. \citet{lu2018deconfounding} formalized the Markov Decision Process under a causal perspective and the emphasis is mainly on sequential optimization.  While there are mathematical connections to causality, they do not discuss the estimation of average treatment effects using observational data. \citet{li2018estimating} modelled potential outcomes for temporal data with observed confounders using state-space models. The models proposed are linear and quadratic regression. \citet{ning2019bayesian} proposed causal estimates for temporal data with observed confounders using linear models and developed Bayesian flavors inference methods. %
\citet{bojinov2019time} generalized the potential outcomes framework to temporal data and proposed an inference method with Horvitz--Thompson estimator \citep{horvitz1952generalization}. This method, however, does not consider the existence of unobserved confounders. \citet{bica2020Estimating,bica2020time} formalized potential outcomes with observed and unobserved confounders to estimate counterfactual outcomes for treatment plans on each individual where the outcomes are modelled with recurrent neural networks. Several efforts  
\citep[e.g.,][]{kaminski2001evaluating,eichler2005graphical,eichler2007granger,eichler2009causal,eichler2010granger,bahadori2012causality} analysed causation for temporal data based on the notion of Granger causality \citep{granger1980testing}.

\begin{figure}
    \centering
    \includegraphics[width=0.49\textwidth]{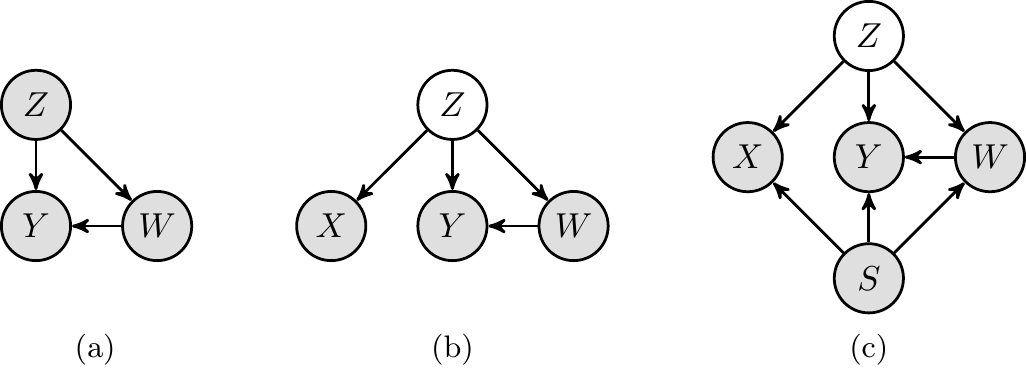}
    \vskip -8pt
    \caption{The SCM framework: approaches of modeling causality. 
    (a) all variables are observed; (b) the confounder $Z$ is latent and being inferred using proxy variable $X$ \citep{louizos2017causal}; (c) there is an additional observed confounder $S$ \citep{madras2019fairness}.}\label{fig:latent-confounder}
    \vskip -6pt
\end{figure}

\begin{figure}
    \centering
\includegraphics[width=0.49\textwidth]{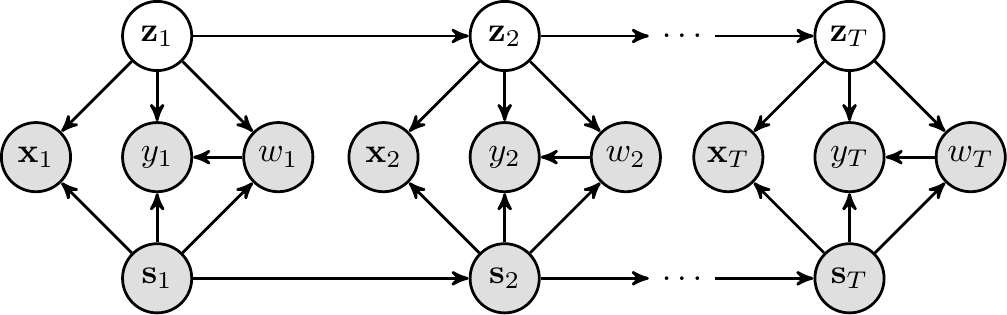}
\vskip -6pt
\caption{A causal model with stochastic confounders for temporal data (Our approach).}\label{fig:model}
\vskip -16pt
\end{figure}

\section{OUR APPROACH}

\label{sec:our-model}
We introduce a temporal causal framework based on SCM, as illustrated in Figure~\ref{fig:model}. Following \citet{Frees:2004longitudinal}, we evaluate the causal effects within a \emph{time interval} denoted by $t$, in concert with the panel data setting. We assume that the interval is large enough to cover the effects of the treatment on the outcome. This assumption is practical as  many interval-censored datasets are recorded monthly or yearly.

\textbf{The latent confounder $\mathbf{z}$.} In real world applications, capturing all the potential confounders is not feasible as some important confounders might not be observed.  When unobserved or latent confounders exist, causal inference from observational data is even more challenging and, as discussed earlier, can result in a biased estimation. The increasing availability of large and rich datasets, however, enables proxy variables for unobserved confounders to be inferred from other observed and correlated variables. In practice, the exact nature and structure of the hidden or latent confounder $\mathbf{z}$ are unknown. %
We assume the structural equation of the latent confounder at time interval $t$ as follows
\begin{align}
\mathbf{z}_t = f_z(\mathbf{z}_{t-1}) + \bm{e}_t,\label{eq:latent-confounders}
\end{align}
where %
the exogenous variable $\bm{e}_t \sim \mathsf{N}(\mathbf{0}, \sigma_z^2\mathbf{I}_{d_z})$,
the function $f_z\colon \mathcal{Z} \mapsto \mathcal{F}_z$ with $\mathcal{Z}$ is the set containing $\mathbf{z}_t$ and $\mathcal{F}_z$ is a Hilbert space, $\sigma_z$ is a hyperparameter and $\mathbf{I}_{d_z}$ denotes the identity matrix. The choice of this structural equation is reasonable because each dimension of $\mathbf{z}_t$ maps to a real value which gives a wide range of possible values for $\mathbf{z}_t$. The function $f_z(\cdot)$ would control the effects of confounder at time interval $t-1$ to its future value at time interval $t$.
With reference to an earlier example, the soil fertility in crop planting can be considered as confounder and this quantity changes over time. When the fertility of soil at time interval $t-1$ declines, which is possibly because of soil erosion, it may result in the fertility of soil at time interval $t$ also declines.
Furthermore, the Gaussian assumption of exogenous variable $\bm{e}_t$ makes it computational tractable for subsequent calculations.

\textbf{The observed variables $y$, $w$, $\mathbf{x}$ and $\mathbf{s}$.} The $d_x$--dimensional observed features at time interval $t$, is denoted by $\mathbf{x}_t\in\mathbb{R}^{d_x}$. Similarly, the treatment variable at time interval $t$ is given by $w_t$. $y_t$ and $\mathbf{s}_t$ denote the outcome and the observed confounder at time interval $t$, respectively. 
$\mathbf{z}_t$ denotes an unobserved confounder. Let $\mathcal{Y}$, $\mathcal{W}$, $\mathcal{S}$, $\mathcal{X}$ be sets containing $y_t$, $w_t$, $\mathbf{s}_t$, $\mathbf{x}_t$, respectively, and let $\mathcal{F}_y$, $\mathcal{F}_w$, $\mathcal{F}_s$, $\mathcal{F}_x$ be Hilbert spaces. Let $f_y\colon \mathcal{Z}\times\mathcal{W}\times\mathcal{S} \mapsto \mathcal{F}_y$, $f_w\colon \mathcal{Z}\times\mathcal{S} \mapsto \mathcal{F}_w$, $f_s\colon \mathcal{S} \mapsto \mathcal{F}_s$, $f_x\colon\mathcal{Z}\times\mathcal{S} \mapsto \mathcal{F}_x$.
We postulate the following structural equations:%
\begin{align}
\!\!\!\!\mathbf{s}_t &= f_s(\mathbf{s}_{t-1}) + \bm{o}_t, &y_t &= f_y(w_t, \mathbf{z}_t, \mathbf{s}_t) + v_t,\label{eq:covariate}\\
\!\!\!\!\mathbf{x}_t &= f_x(\mathbf{z}_t, \mathbf{s}_t) + \bm{r}_t,  &w_t &= \mathds{1}(\varphi(f_w(\mathbf{z}_t, \mathbf{s}_t))\ge u_t),\label{eq:treatments}
\end{align}
where the exogenous variables $\bm{o}_t \sim \mathsf{N}(\mathbf{0}, \sigma_s^2\mathbf{I}_{d_s})$, $\bm{r}_t \sim \mathsf{N}(\mathbf{0}, \sigma_x^2\mathbf{I}_{d_x})$, $v_t \sim \mathsf{N}(0, \sigma_y^2)$ and $u_t \sim \mathsf{U}[0, 1]$, $\mathds{1}(\cdot)$ denotes the indicator function
and $\varphi(\cdot)$ is the logistic function. The last structural equation implies that $w_t$ is Bernoulli distributed given $\mathbf{z}_t$ and $\mathbf{s}_t$, and $\p(w_t=1|\mathbf{z}_t,\mathbf{s}_t) = \varphi(f_w(\mathbf{z}_t, \mathbf{s}_t))$. Similar reasoning holds for these assumptions in that they afford computational tractability in terms of time series formulation. Our model assumes that the treatment and outcome at time interval $t-1$ are independent with the treatment and outcome at time interval $t$ given the confounders. These assumptions are important in real-life, for example, in the context of agriculture, the crop yield (outcome) of the current crop season probably does not affect crop yield in the next season, and similarly for the chosen of fertilizer (treatment). However, they may be correlated through the latent confounders, e.g., soil fertility of the land.

\textbf{Learnable functions.}
With Eqs.~(\ref{eq:latent-confounders})-(\ref{eq:treatments}), we need to learn $f_i$, where $i\in \bm{\mathcal{A}} \vcentcolon=\{y,w,z,x,s\}$. Standard methods to model these functions include linear models or multi-layered neural networks. Selecting models for these functions relies on many problem-specific aspects such as types of data (e.g., text, images), dataset size (e.g., hundreds, thousands, or millions of data points), and data dimensionality (high- or low-dimension). 
We propose to model these functions through an \textit{augmented representer theorem}, to be desribed in Section~\ref{sec:estimating}.
\subsection{Causal Quantities of Interest}
\label{sec:rep-causal-effect}
Temporal data capture the evolution of the characteristics over time. Based on earlier work \citep{louizos2017causal,pearl2009causal,madras2019fairness}, we evaluate causal inference from temporal data by assuming confounders satisfy Eq.~(\ref{eq:latent-confounders}). The corresponding causal graph is shown in Figure~\ref{fig:model}. %
 This relaxes the independent assumption in \citet{louizos2017causal} and \citet{madras2019fairness}. 
 In particular, we aim to measure the causal effects of $w_t$ on $y_t$ given the covariate $\mathbf{x}_t$, where $\mathbf{x}_t$ serves as the proxy variable to infer latent confounder $\mathbf{z}_t$. %
 This formulation subsumes earlier approaches \citep{louizos2017causal,madras2019fairness}. Considering multiple time intervals, we further denote the vector notations for $T$ time intervals as follows:
$\mathbf{y} = [y_1,\!..., y_T]^\top, \mathbf{w} = [w_1,\!..., w_T]^\top, \mathbf{s} = [\mathbf{s}_1,\!..., \mathbf{s}_T]^\top, 
\mathbf{x} = [\mathbf{x}_1,\!..., \mathbf{x}_T]^\top, 
\mathbf{z} = [\mathbf{z}_1,\!..., \mathbf{z}_T]^\top, 
\mathbf{z}_0 = \mathbf{0}, \mathbf{s}_0 = \mathbf{0}$. %
We define \textit{fixed-time} causal effects as the causal effects at a time interval $t$ and \textit{range-level} causal effects as the average causal effects in a time range. The time and range-level causal effects can be estimated using Pearl's \textit{do}-calculus.
We model the unobserved confounder processes using the latent variable $\mathbf{z}_t$, inferred for each observation ($\mathbf{x}_t, \mathbf{s}_t, w_t, y_t$) at time interval $t$. The interval is assumed to be large enough to cover the effects of the treatment $w_t$ on the outcome $y_t$. This assumption is practical as  many interval-censored datasets are recorded monthly or yearly.
\begin{definition}\label{def:ep} Let $\mathbf{w}_1$ and $\mathbf{w}_2$ be two treatment paths. The treatment effect path (or effect path) of $t \in [1,2,\!...,T]$ is defined as follows
	\begin{align}
	\!\!\!\textnormal{EP} \vcentcolon= \e[\,\mathbf{y}\,|\,\doo(\mathbf{w} \!=\! \mathbf{w}_1), \mathbf{x}] \!-\! \e[\,\mathbf{y}\,|\,\doo(\mathbf{w} \!=\! \mathbf{w}_2), \mathbf{x}].\label{eq:effect-path}
	\end{align}
\end{definition}
The effect path (EP) is a collection of causal effects at time interval $t \in [1,2,\dots,T]$.
\begin{definition}\label{def:ae} Let $\textnormal{EP}$ be the effect path satisfying Definition~\textnormal{\ref{def:ep}}. Let $\textnormal{EP}_t \in \textnormal{EP}$ be the effect at time interval $t$. The average treatment effect (ATE) in $[T_1, T_2]$ is defined as follows
	\begin{align}
	\textnormal{ATE} \textstyle \vcentcolon= \left(\sum_{t=T_1}^{T_2} \textnormal{EP}_t\right)\big/(T_2-T_1+1).\label{eq:avg-effect}
	\end{align}
\end{definition}
The average treatment effect (ATE) quantifies the effects of a treatment path $\mathbf{w}_1$ over an alternative treatment path $\mathbf{w}_2$. This work focuses on evaluating the causal effects with binary treatment.%
~The key quantity in estimating the effect path and average treatment effects is $\p(\mathbf{y} \,|\, \doo(\mathbf{w}), \mathbf{x})$, which is the distribution of $\mathbf{y}$ given $\mathbf{x}$ after setting variable $\mathbf{w}$ by an intervention. 
Following Pearl's back-door adjustment and invoking the properties of $d$-separation, the causal effects of $\mathbf{w}$ to $\mathbf{y}$ given $\mathbf{x}$ with respect to the causal graph in Figure~\ref{fig:model} is as follows
	\begin{eqnarray}
	\label{eq:tmp}
	\p(\mathbf{y} | \doo(\mathbf{w}), \mathbf{x}) = \int\p(\mathbf{y} | \mathbf{w}, \mathbf{z}, \mathbf{s})\p(\mathbf{z},\mathbf{s}| \mathbf{x})d\mathbf{s}d\mathbf{z}.\label{eq:causal-effect}
	\end{eqnarray}
The expression in Eq.~(\ref{eq:tmp}) typically does not have an analytical solution when the distributions $\p(\mathbf{y} \,|\, \mathbf{w}, \mathbf{z}, \mathbf{s})$, $\p(\mathbf{z},\mathbf{s}\,|\, \mathbf{x})$ are parameterized by more involved impulse functions, e.g., a nonlinear function such as multi-layered neural network. Thus, the empirical expectation of $\mathbf{y}$ is used as an approximation. 
To do so, we first draw samples of $\mathbf{z}$ and $\mathbf{s}$ from $\p(\mathbf{z},\mathbf{s}| \mathbf{x})$, and then substitute these samples to $\p(\mathbf{y}| \mathbf{w}, \mathbf{z}, \mathbf{s})$ to draw samples of $\mathbf{y}$. The whole procedure is carried out using forward sampling techniques under specific forms of $\p(\mathbf{y}| \mathbf{w}, \mathbf{z}, \mathbf{s})$, $\p(\mathbf{z}|\mathbf{x},\mathbf{s},\mathbf{w},\mathbf{y})$, $\p(\mathbf{y}|\mathbf{s},\mathbf{x},\mathbf{w})$, $\p(\mathbf{w}|\mathbf{s},\mathbf{x})$ and $\p(\mathbf{s}|\mathbf{x})$. In the next section, we present approximations of these probability distributions.

\section{ESTIMATING CAUSAL EFFECTS}
\label{sec:estimating}
Estimating causal effects requires systematic sampling from the following distributions: $\p(\mathbf{s}|\mathbf{x})$, $\p(\mathbf{w}|\mathbf{s},\mathbf{x})$, $\p(\mathbf{y}|\mathbf{s},\mathbf{x},\mathbf{w})$, $\p(\mathbf{z}|\mathbf{x},\mathbf{s},\mathbf{w},\mathbf{y})$ and  $\p(\mathbf{y}|\mathbf{w},\mathbf{z},\mathbf{s})$. This section presents approximations to these distributions.

\subsection{The Posterior of Latent Confounders} Exact inference of $\mathbf{z}$ is intractable for many models, such as multi-layered neural networks. Hence, we infer $\mathbf{z}$ using variational inference, which approximates the true posterior $\p(\mathbf{z}|\mathbf{x},\mathbf{s},\mathbf{w},\mathbf{y})$ by a parametric variational posterior $\q(\mathbf{z}|\mathbf{x},\mathbf{s}, \mathbf{w},\mathbf{y})$. This approximation is obtained by minimizing the Kullback-Leibler divergence ($D_{\textrm{KL}}$): $D_{\textrm{KL}}[\q(\mathbf{z}|\mathbf{x},\mathbf{s},\mathbf{w},\mathbf{y})\|\p(\mathbf{z}|\mathbf{x},\mathbf{s}, \mathbf{w},\mathbf{y})]$, which is equivalent to maximizing the evidence lower bound (ELBO) of the marginal likelihood:
\begin{align}
\!\!\!\!\!\!\mathcal{L} &\!=\!E_{\mathbf{z}}\Big[\log\p(\mathbf{y}, \mathbf{w}, \mathbf{s}, \mathbf{x} | \mathbf{z})\Big]  \label{eq:elbo}\\
&\!-\! \sum_{t=1}^TE_{\mathbf{z}_{t-1}}\Big[D_{\text{KL}}\big(\q(\mathbf{z}_t|y_t,\mathbf{x}_t,w_t,\mathbf{s}_t)\|\p(\mathbf{z}_t|\mathbf{z}_{t-1})\big)\Big].\nonumber
\end{align}
The expectations are taken with respect to variational posterior $\q(\mathbf{z}|\mathbf{x},\mathbf{s}, \mathbf{w},\mathbf{y})$, and each term in the ELBO depends on the assumption of their distribution family presented in Section~\ref{sec:our-model}. %
We further assume that the variational posterior distribution takes the form $
\q(\mathbf{z}_t|\cdot) = \mathsf{N}(\mathbf{z}_t|f_q(y_t, w_t, \mathbf{s}_t, \mathbf{x}_t), \sigma_q^2\mathbf{I}_{d_z})$,
where $f_q\colon \mathcal{Y}\times\mathcal{W}\times\mathcal{S} \times \mathcal{X} \rightarrow \mathcal{F}_z$ is a function parameterizing the designated distribution, $\sigma_q$ is a hyperparameter, and $\mathbf{I}_{d_x}$ denotes the identity matrix. Our aim is to learn $f_i$ where $i\in \bm{\mathcal{A}}\leftarrow\bm{\mathcal{A}} \cup \{q\}$.

\subsubsection{Inference of The Posterior}
\label{sec:elbo-rt}

To formulate a regularized empirical risk, we draw $L$ samples of $\mathbf{z}$ from the variational posterior  using reparameterization trick: $\mathbf{z}^l = [\mathbf{z}_1^l,\!...,\mathbf{z}_T^l]$ with $\mathbf{z}_t^l = f_q(y_t, w_t, \mathbf{s}_t, \mathbf{x}_t) + \sigma_q\bm{\varepsilon}_t^l$ and $\bm{\varepsilon}_t^l \sim \mathsf{N}(0,\mathbf{I}_{d_x})$. By drawing $L$ noise samples $\bm{\varepsilon}_t^1,\!...,\bm{\varepsilon}_t^L$ at each time interval $t$ in advance, we obtain a \textit{complete} dataset
\begin{align*}
\mathcal{D} = \bigcup_{l=1}^L	\bigcup_{t=1}^T\Big\{\left(y_t, w_t, \mathbf{x}_t, \mathbf{s}_t, \mathbf{z}_t^l\right)\Big\}.
\end{align*}
At each time interval $t$, the dataset gives a tuple of the observed values $y_t, w_t, \mathbf{x}_t, \mathbf{s}_t$, and an expression of $\mathbf{z}_t^l = f_q(y_t, w_t, \mathbf{s}_t, \mathbf{x}_t) + \sigma_q\bm{\varepsilon}_t^l$. We state the following:

\begin{lemma} \label{theo:ker-min-form}
	Let $\kappa_{i}$ be kernels and $\mathcal{H}_i$  their associated reproducing kernel Hilbert space (RKHS), where $i\in\bm{\mathcal{A}}$. Let the empirical risk obtained from the negative ELBO be $\widehat{\mathcal{L}}$. Consider minimizing the following objective function
\begin{align}
	J &=\widehat{\mathcal{L}}\left({\bigcup}_{i\in \bm{\mathcal{A}}} f_i\right)  + \sum_{i\in\bm{\mathcal{A}}}\lambda_i\|f_i\|_{\mathcal{H}_i}^2\label{eq:loss}
\end{align}
	with respect to functions $f_i$ ($i\in\bm{\mathcal{A}}$), where $\lambda_i \in \mathbb{R}^+$. Then, the minimizer of \textnormal{(\ref{eq:loss})} has the following form $f_i = \sum_{l=1}^{T\times L}\,\kappa_i(\,\cdot\,,\bm{\nu}^i_l)\bm{\beta}_{l}^i$ ($\forall i\in\bm{\mathcal{A}}$), where $\bm{\nu}^i_l$ is the $l^{th}$ input to function $f_i$, i.e., it is a subset of the $l^{th}$ tuple of $\mathcal{D}$, and  the coefficients $\bm{\beta}_{l}^i$ are vectors in the Hilbert space $\mathcal{F}_i$. This minimizer further emits the following solution: $\bm{\beta}^i = [\bm{\beta}_l^i,\!..., \bm{\beta}_{TL}^i]^\top  =\big(\sum_{l=1}^L{\mathbf{K}_i^l}^\top\mathbf{K}_i^l + \lambda_iL\mathbf{K}_i\big)^{-1}\sum_{l=1}^L{\mathbf{K}_i^l}^\top\bm{\psi}^i$ for $i\in \bm{\mathcal{A}}\setminus\{w,q\}$ and $\bm{\psi}^y = \mathbf{y}$, $\bm{\psi}^x = \mathbf{x}$, $\bm{\psi}^s = \mathbf{s}$, $\bm{\psi}^z = \mathbf{K}_q\bm{\beta}^q$.%
\end{lemma}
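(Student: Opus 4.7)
The proof has two parts: first, a standard representer-theorem argument reduces each infinite-dimensional $f_i$ to a finite kernel expansion; second, for the indices $i$ whose ELBO contribution is Gaussian, the resulting finite-dimensional convex quadratic is solved by its normal equations. The key observation enabling the first step is that, once the reparameterized noise samples $\bm{\varepsilon}_t^l$ are drawn and frozen, the empirical negative ELBO $\widehat{\mathcal{L}}$ depends on every $f_i$ only through its values at the $TL$ tuples in $\mathcal{D}$: for $i \in \{y,w,s,x,z\}$ this is immediate from the structural equations, and for $i = q$ it holds because $f_q$ enters the loss only through $\mathbf{z}_t^l = f_q(y_t,w_t,\mathbf{s}_t,\mathbf{x}_t) + \sigma_q\bm{\varepsilon}_t^l$ at the fixed observed inputs. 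Decomposing each $f_i = f_i^\parallel + f_i^\perp$ with $f_i^\parallel \in \mathrm{span}\{\kappa_i(\cdot,\bm{\nu}_l^i)\bm{\beta} : \bm{\beta} \in \mathcal{F}_i\}$, the reproducing property in the operator-valued RKHS gives $f_i^\perp(\bm{\nu}_l^i) = 0$, so $\widehat{\mathcal{L}}$ is unchanged, while $\|f_i^\perp\|_{\mathcal{H}_i}^2$ strictly penalizes any non-zero $f_i^\perp$. The minimizer must therefore take the claimed form $f_i = \sum_{l=1}^{TL}\kappa_i(\cdot,\bm{\nu}_l^i)\bm{\beta}_l^i$.

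\textbf{Deriving the closed form.} For $i \in \bm{\mathcal{A}} \setminus \{w,q\}$ the corresponding likelihood (or conditional prior, for $i = z$) is Gaussian with known variance, so after Monte-Carlo averaging with the $L$ frozen samples the relevant terms of $\widehat{\mathcal{L}}$ reduce to $\frac{1}{2\sigma_i^2 L}\sum_{l=1}^L \|\bm{\psi}^i - \mathbf{K}_i^l \bm{\beta}^i\|^2$, where $\mathbf{K}_i^l$ is the Gram matrix whose $t$-th row evaluates the representer expansion of $f_i$ at the $t$-th datum under the $l$-th noise draw, and $\bm{\psi}^i$ is the regression target: $\mathbf{y},\mathbf{x},\mathbf{s}$ for $i\in\{y,x,s\}$, and for $i = z$ the variational mean $f_q(y_t,w_t,\mathbf{s}_t,\mathbf{x}_t) = \mathbf{K}_q\bm{\beta}^q$ that arises from the Gaussian KL term $\|f_z(\mathbf{z}_{t-1}^l) - f_q(\cdot)\|^2$ in the ELBO. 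Plugging the expansion into the penalty $\lambda_i\|f_i\|_{\mathcal{H}_i}^2 = \lambda_i\bm{\beta}^{i\top}\mathbf{K}_i\bm{\beta}^i$ and differentiating the resulting convex quadratic in $\bm{\beta}^i$ yields the first-order condition $\bigl(\sum_l {\mathbf{K}_i^l}^\top \mathbf{K}_i^l + \lambda_i L \mathbf{K}_i\bigr)\bm{\beta}^i = \sum_l {\mathbf{K}_i^l}^\top \bm{\psi}^i$ (after absorbing $2\sigma_i^2$ into the regularization constant), which is the announced expression.

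\textbf{Expected obstacle.} The chief subtlety I expect to have to justify carefully is the coupling between $f_q$ and the other functions: the matrices $\mathbf{K}_i^l$ and $\mathbf{K}_i$ for $i \ne q$ depend implicitly on $\bm{\beta}^q$ through the samples $\mathbf{z}_t^l$, so the closed form is really a stationarity condition \emph{given} the current $f_q$. This is exactly why the lemma excludes the indices $q$ (whose contribution to $\widehat{\mathcal{L}}$ remains nonlinear in the kernel features and admits no analytical minimizer) and $w$ (whose Bernoulli likelihood produces a logistic-regression loss); it also anticipates an alternating/coordinate-descent outer loop for actual training. A secondary bookkeeping point is that $\bm{\beta}_l^i \in \mathcal{F}_i$ is vector-valued, so $\bm{\beta}^i$ is really a matrix; however, the Gaussian likelihoods are isotropic across the output coordinates of $\mathcal{F}_i$, so the normal equations decouple componentwise and the stated formula remains correct in either reading.
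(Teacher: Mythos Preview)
Your proposal is correct and follows essentially the same approach as the paper: a slight modification of the classical representer theorem (orthogonal decomposition in each $\mathcal{H}_i$, with the empirical negative ELBO depending on $f_i$ only through its values at the $TL$ frozen tuples of $\mathcal{D}$), followed by solving the resulting quadratic normal equations for the Gaussian-likelihood components $i\in\bm{\mathcal{A}}\setminus\{w,q\}$. Your discussion of why $w$ and $q$ are excluded, and of the conditional nature of the closed form given $\bm{\beta}^q$, matches the paper's treatment.
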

As mentioned in Section~\ref{sec:our-model}, we propose to model $f_i$ using kernel methods. A natural way is to develop a slight modification to the classical representer theorem  \citep{kimeldorf1970correspondence,scholkopf2001generalized} so that it can be applied to the optimization of the ELBO. Eq.~(\ref{eq:loss}) is minimized with respect to the weights $\bm{\beta}^i$ and hyperparameters of the kernels. The proof of Lemma \ref{theo:ker-min-form} is deferred to Appendix.

\begin{lemma}
\label{lem:convex-analysis}
    For any fixed $\bm{\beta}^q$, the objective function $J$ in Eq.~\emph{(\ref{eq:loss})} is convex with respect to $\bm{\beta}^i$ for all $i\in \bm{\mathcal{A}}\setminus\{q\}$.
\end{lemma}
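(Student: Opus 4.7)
The plan is to exploit a decoupling that occurs once $\bm{\beta}^q$ is fixed. Recall that every entry of the complete dataset $\mathcal{D}$ is of the form $(y_t, w_t, \mathbf{x}_t, \mathbf{s}_t, \mathbf{z}_t^l)$ with $\mathbf{z}_t^l = f_q(y_t,w_t,\mathbf{s}_t,\mathbf{x}_t) + \sigma_q \bm{\varepsilon}_t^l$. Fixing $\bm{\beta}^q$ therefore freezes all $\mathbf{z}_t^l$, hence freezes the input points $\bm{\nu}^i_l$ of the representer expansions of Lemma~\ref{theo:ker-min-form}. In particular, for each $i\in\bm{\mathcal{A}}\setminus\{q\}$, the function $f_i(\cdot) = \sum_l \kappa_i(\cdot,\bm{\nu}^i_l)\bm{\beta}^i_l$ becomes \emph{linear} in its coefficient $\bm{\beta}^i$, and its evaluations at any training point are affine maps of $\bm{\beta}^i$.

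Next I would substitute these representer forms into $J$ and verify that the non-$q$ coefficients decouple into a sum $J = c(\bm{\beta}^q) + \sum_{i\in \bm{\mathcal{A}}\setminus\{q\}} J_i(\bm{\beta}^i)$. This follows from the factorized form of the ELBO in Eq.~(\ref{eq:elbo}): $\log\p(y_t \mid w_t,\mathbf{z}_t,\mathbf{s}_t)$ depends only on $f_y$, $\log\p(w_t\mid \mathbf{z}_t,\mathbf{s}_t)$ only on $f_w$, $\log\p(\mathbf{s}_t\mid\mathbf{s}_{t-1})$ only on $f_s$, $\log\p(\mathbf{x}_t\mid\mathbf{z}_t,\mathbf{s}_t)$ only on $f_x$, and in the Gaussian–Gaussian KL term $D_{\text{KL}}(\q(\mathbf{z}_t\mid\cdot)\,\|\,\p(\mathbf{z}_t\mid\mathbf{z}_{t-1}))$ the only term that varies once $\bm{\beta}^q$ is fixed is $\|f_q(\cdot) - f_z(\mathbf{z}_{t-1}^l)\|^2/(2\sigma_z^2)$, which is controlled solely by $\bm{\beta}^z$. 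Finally, each regularizer $\lambda_i\|f_i\|_{\mathcal{H}_i}^2 = \lambda_i (\bm{\beta}^i)^\top \mathbf{K}_i \bm{\beta}^i$ involves only $\bm{\beta}^i$.

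It then remains to check convexity of each summand $J_i$. For $i\in\{y,s,x,z\}$ the corresponding log-density (or KL contribution) is Gaussian in a shift of $f_i$ evaluated at fixed inputs, so $-\log\p(\cdot)$ contributes a term of the form $\|\bm{\psi} - \mathbf{K}_i^{(\cdot)}\bm{\beta}^i\|^2$ up to additive constants; added to the PSD quadratic $\lambda_i(\bm{\beta}^i)^\top\mathbf{K}_i\bm{\beta}^i$ (convex because $\mathbf{K}_i\succeq 0$), this produces a convex quadratic in $\bm{\beta}^i$. For $i=w$ the negative log-likelihood is the softplus loss $\log(1+\exp(-(2w_t-1)f_w(\mathbf{z}_t^l,\mathbf{s}_t)))$; since softplus is a convex function and $f_w$ is affine in $\bm{\beta}^w$, the composition is convex in $\bm{\beta}^w$, and the regularizer is again convex. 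Summing convex functions yields convexity of $J$ in $\bm{\beta}^i$ for every $i\in\bm{\mathcal{A}}\setminus\{q\}$ (in fact jointly in the collection).

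The routine part is the four Gaussian summands, which reduce to standard least-squares algebra. The only non-trivial step is the $w$-term: one must invoke convexity of the log-partition / softplus function and the composition-with-affine rule, rather than appealing to quadratic structure. A small bookkeeping obstacle is to confirm that no hidden cross-coupling sneaks in through the representer input points $\bm{\nu}^i_l$; this is where the precise role of fixing $\bm{\beta}^q$ is essential, since every such $\bm{\nu}^i_l$ for $i\neq q$ is a deterministic function of the observed data and the (now frozen) samples $\mathbf{z}_t^l$.
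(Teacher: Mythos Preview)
Your proposal is correct and follows essentially the same route as the paper's proof: once $\bm{\beta}^q$ is fixed the representer inputs are frozen, the Gaussian pieces for $i\in\{y,s,x,z\}$ collapse to convex quadratics of the form $(\bm{\beta}^i)^\top \mathbf{C}\bm{\beta}^i + \mathbf{c}^\top\bm{\beta}^i$ with $\mathbf{C}\succeq 0$, and the Bernoulli term is handled by noting that the logistic/softplus loss is convex and its argument $\mathbf{K}_w^l\bm{\beta}^w$ is affine in $\bm{\beta}^w$. The paper phrases the $w$-term via the cross-entropy $-\mathbf{w}^\top\log\varphi(\mathbf{K}_w^l\bm{\beta}^w) - (\bm{1}-\mathbf{w})^\top\log\varphi(-\mathbf{K}_w^l\bm{\beta}^w)$ rather than softplus, but this is the same object; your explicit justification of why fixing $\bm{\beta}^q$ removes any hidden coupling through the $\bm{\nu}^i_l$ is a useful clarification that the paper leaves implicit.
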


\begin{proof}
We present the sketch of proof here and details are deferred to Appendix. It can be shown that the objective function is a combination of several components including $(\bm{\beta}^i)^\top \mathbf{C}\bm{\beta}^i$, $ \mathbf{c}^\top\bm{\beta}^i$, $-\mathbf{w}^\top\log\varphi(\mathbf{K}_w^l\bm{\beta}^w)$ and $- (\bm{1}-\mathbf{w})^\top\log\varphi(-\mathbf{K}_w^l\bm{\beta}^w)$, where $i\in\{y,s,x,z\}$, $\mathbf{C}$ is a positive semi-definite matrix and $\mathbf{c}$ is a vector. The first component is a quadratic form. Thus, its second-order derivative with respective to $\bm{\beta}^i$ is a positive semi-definite matrix; hence, it is convex. The second term is a linear function of $\bm{\beta}^i$ thus it is convex. The two last terms come from cross-entropy loss function and the input to these function are linear combination of the kernel function evaluated between each pair of data points. So these two terms are also convex. 
Consequently, $J$ is convex with respective to $\bm{\beta}^i$ ($i\in\bm{\mathcal{A}}\setminus\{q\}$) because it is a linear combination of convex components.
\end{proof}
\vspace{-6pt}

Lemma~\ref{lem:convex-analysis} implies that at an iteration in the optimization that $\bm{\beta}^q$ reach its convex hull, the objective function $J$ will reach its minimal point after a few more iterations. This is because the non-convexity of $J$ is induced only by $\bm{\beta}^q$. This result indicates that we should attempt different random initialization on $\bm{\beta}^q$ instead of the other parameters when optimizing $J$ because $\bm{\beta}^i$ ($i\in\bm{\mathcal{A}}\setminus\{q\}$) always converge to its optimal point (conditioned on $\bm{\beta}^q$).

\subsection{The Auxiliary Distributions} 

The previous steps approximate the posterior $\p(\mathbf{z}|\cdot)$ by variational posterior $\q(\mathbf{z}|\cdot)$ and estimate the density of $\p(\mathbf{y}|\mathbf{w},\mathbf{z},\mathbf{s})$. This section outlines the approximation of $\p(\mathbf{y}|\mathbf{s},\mathbf{x},\mathbf{w})$, $\p(\mathbf{w}|\mathbf{s},\mathbf{x})$ and $\p(\mathbf{s}|\mathbf{x})$. Denote their corresponding approximation as $\widetilde{\p}(\mathbf{y}|\mathbf{s},\mathbf{x},\mathbf{w})$, $\widetilde{\p}(\mathbf{w}|\mathbf{s},\mathbf{x})$, $\widetilde{\p}(\mathbf{s}|\mathbf{x})$, we estimate parameters of those distribution directly using classical representer theorem. We briefly describe how to approximate $\widetilde{\p}(\mathbf{w}|\mathbf{s},\mathbf{x})$. The regularized empirical risk obtained from the negative log-likelihood of $\widetilde{\p}(\mathbf{w}|\mathbf{s},\mathbf{x})$ is $
J_w = \sum_{t=1}^T \ell_{\textnormal{Xent}}\big(w_t, g_w(w_{t-1},\mathbf{s}_t,\mathbf{x}_t)\big) + \delta_w\|g_w\|_{\mathcal{V}_w}^2$,
where $\ell_{\textnormal{Xent}}(\cdot,\cdot)$ is the cross-entropy loss function, $\delta_w \in \mathbb{R}^+$, $g_w\colon\mathcal{W}\times \mathcal{S} \times \mathcal{X}\mapsto \mathcal{F}_w$, and $\mathcal{V}_w$ is the RKHS with kernel function $\tau_w(\cdot,\cdot)$. By classical representer theorem, the minimized form of $g_w$ is given by $g_w = \sum_{j=1}^T \alpha_j^w\tau_w(\,\cdot\,,[w_{j-1}, \mathbf{s}_j, \mathbf{x}_j])$, where $\alpha_j^w \in \mathbb{R}$ is the parameter to be learned. It can be shown that $J_w$ is a convex objective function because the input to the cross-entropy function is linear. Other distributions can be estimated in a similar fashion and the full description is deferred to Appendix.

\section{EXPERIMENTS}
\label{sec:experiment}
In this section, we examine the performance of our framework in estimating causal effects from temporal data on both synthetic and real-world datasets. We compare with the following baselines: \textbf{(1)~} The potential outcomes-based model for time series data by \citet{bojinov2019time}, which uses Horvitz-Thompson estimator to evaluate causal effects. The key factor of this method is the `adapted propensity score' to which we have implemented two versions of this score. The first one uses a fully connected neural network.
Herein, we assume that $
	\p(w_t|\mathbf{w}_{1:t-1}, \mathbf{y}_{1:t-1}) \!=\! \mathsf{Bern}(w_t|f(w_{t-1},y_{t-1}))$
 with $f(w_{t-1},y_{t-1})$  is a neural network taking the observed value of $w_{t-1},y_{t-1}$ as input to predict $w_t$. 
The second one uses Long-Short Term Memory (LSTM) to estimate $\p(w_t|\mathbf{w}_{1:t-1}, \mathbf{y}_{1:t-1})$. 
\textbf{(2)~}The second baseline is {TARNets}, a model for inferring treatment effect by \citet{shalit2017estimating}.
\textbf{(3)~}The third baseline, {CEVAE} \citep{louizos2017causal} is a causal inference model based on variational auto-encoders. We reused the code of CEVAE and TARNets which are available online to train these models. \textbf{(4)~}The last baseline is fairness through causal awareness by \citet{madras2019fairness}. This method is an extension of CEVAE where they consider two types of confounder: observed and latent ones. To evaluate the performance of each methods, we report the absolute error of the ATE: $\epsilon_\textrm{ATE} \vcentcolon= |\,\textrm{ATE} - \widehat{\textrm{ATE}}\,|$ and the precision of estimating heterogeneous effects (PEHE) \citep{Hill:2011}: $\epsilon_\textrm{PEHE} \vcentcolon= \big(\sum_{i=T_1}^{T_2}(\textrm{EP}_i - \widehat{\textrm{EP}}_i)^2\big)/(T_2-T_1+1)$.%

For the neural network setup on each baseline, we closely follow the architecture of \citet{louizos2017causal} and \citet{shalit2017estimating}. Unless otherwise stated, we utilize a fully connected network with \texttt{ELU} as activation function and use the same number of hidden nodes in each hidden layer. We fine-tune the network with $2, 4, 6$ hidden layers and $50, 100, 150, 200, 250$ nodes per layer. We also fine-tune the learning rate in $\{10^{-1}, 10^{-2}, 10^{-3}, 10^{-4}\}$ and use \textit{Adamax} \citep{DBLP:journals/corr/KingmaB14} for optimization.

\subsection{Illustration on Modeling with Stochastic Confounders}
Before carrying out our main experiments, we first illustrate the importance of our proposed stochastic confounders in estimating causal effects for time series data. We consider the ground truth causal model whose structural equations are as follows:
\begin{align*}
    s_t &= a_0 + a_1s_{t-1} + o_t,
    &\,\,\,\,\,\,\,\,\,\,\,y_t = f_y(s_t, w_t) + v_t,\\
    w_t &= \mathds{1}(\varphi(b_0 + b_1z_t) \ge u_t),&
\end{align*}
where $\mathds{1}(\cdot)$ is the indicator function, $s_0 = 0$, $o_t \sim \mathsf{N}(0,0.3^2)$, $u_t \sim \mathsf{U}[0,1]$, and $v_t \sim \mathsf{N}(0,1)$. In this model, $s_t$, $w_t$, $y_t$ are endogenous variables, and  $o_t$, $u_t$, $v_t$ are exogenous variables. The functions $f_i$ ($i \in \bm{\mathcal{A}}\setminus\{y\}$) in this model are linear. We consider three cases for the ground truth of function $f_y$: \textbf{(1)} Linear outcome: $f_y(s_t, w_t) = c_0 + c_1s_t + c_2w_t$, \textbf{(2)} Quadratic outcome: $f_y(s_t, w_t) = (c_0 + c_1s_t + c_2w_t)^2$, and \textbf{(3)} Exponential outcome: $f_y(s_t, w_t) = \exp(c_0 + c_1s_t + c_2w_t)$. For all the cases, we randomly choose the true parameters $(a_0, a_1, b_0, b_1, c_0, c_1, c_2) = (0.7, 0.95, 0.2, -0.1, 0.7, 0.4, 1.7)$ and sample three time series of length $T=1000$ for $z_t, w_t, y_t$ from the above distributions. Herein, we keep $w_t, y_t, s_t$ as the observed data and use the following three inference models to estimate causal effects: Model-1: without confounders, Model-2: with \emph{iid} confounders and Model-3: with stochastic confounders. The error reported in Table~\ref{tab:illustration-stochastic-confounders} shows that the model with stochastic confounders outperforms the others as it fits well to the data. In the following sections, we present our main experiments with more complicated functions $f_i$ ($i \in \bm{\mathcal{A}}$).
\begin{table*}
\caption{The errors of the estimated treatment effects}\label{tab:illustration-stochastic-confounders}
\vskip -9pt
\centering
	\setlength{\tabcolsep}{2.7pt}
\begin{tabular}{l>{\centering\arraybackslash}p{1.6cm}>{\centering\arraybackslash}p{1.6cm}>{\centering\arraybackslash}p{1.6cm}>{\centering\arraybackslash}p{1.6cm}>{\centering\arraybackslash}p{1.9cm}>{\centering\arraybackslash}p{1.9cm}}
\toprule
  & \multicolumn{2}{c}{\begin{tabular}[c]{@{}c@{}}Model-1 \\without confounders\end{tabular}} & \multicolumn{2}{c}{\begin{tabular}[c]{@{}c@{}} Model-2\\with \emph{iid} confounders \end{tabular}}& \multicolumn{2}{c}{\begin{tabular}[c]{@{}c@{}}Model-3\\with stochastic confounders\end{tabular}} \\\cmidrule(lr){2-3}\cmidrule(lr){4-5}\cmidrule(lr){6-7}
              & $\sqrt{\epsilon_\textrm{PEHE}}$            & $\epsilon_\textrm{ATE}$                    & $\sqrt{\epsilon_\textrm{PEHE}}$                 & $\epsilon_\textrm{ATE}$                     & $\sqrt{\epsilon_\textrm{PEHE}}$            & $\epsilon_\textrm{ATE}$             \\\cmidrule(lr){2-3}\cmidrule(lr){4-5}\cmidrule(lr){6-7}
Linear outcome                & 0.06$\pm$.01            & 0.06$\pm$.01        & \textbf{0.05$\pm$.01}        & \textbf{0.05$\pm$.01}     & \textbf{0.05$\pm$.01} &    \textbf{0.05$\pm$.01}\\
Quadratic outcome             & 2.53$\pm$.08            & 2.26$\pm$.06       & 1.78$\pm$.03                 & 0.88$\pm$.03     & \textbf{0.33$\pm$.02}   & \textbf{0.26$\pm$.02}\\
Exponential outcome            & 4.52$\pm$.12            &3.36$\pm$.15        & 4.18$\pm$.20                 & 2.41$\pm$.07     & \textbf{0.91$\pm$.03}    & \textbf{0.78$\pm$.06}\\ \bottomrule

\end{tabular}
\vskip -6pt
\end{table*}

\subsection{Synthetic Experiments}
Since obtaining ground truth for evaluating causal inference methods is challenging, most of the recent methods are evaluated with synthetic or semi-synthetic datasets \citep{louizos2017causal}. This set of experiments is conducted on four synthetic datasets and one benchmark dataset, where three synthetic datasets are time series with latent stochastic confounders and the other two datasets are static data with \textit{iid} confounders.
 
\textbf{Synthetic dataset.} We simulate data for $y_t$, $w_t$, $\mathbf{x}_t$, $\mathbf{s}_t$ and $\mathbf{z}_t$ from their corresponding distributions as in Eqs.~(\ref{eq:latent-confounders})-(\ref{eq:treatments}) each with length $T=200$.
The ground truth nonlinear functions $f_i ( i \in \bm{\mathcal{A}}\setminus\{q\})$ with respect to the distributions of $y_t, w_t, \mathbf{x}_t, \mathbf{s}_t, \mathbf{z}_t$ are fully connected neural networks (refer to Appendix for details of these functions).
Using different numbers of the hidden layers, i.e., 2, 4, and 6, we construct three synthetic datasets, namely TD2L, TD4L, and TD6L.
For these three datasets, we sample the latent confounder variable $\mathbf{z}$ satisfying Eq.~(\ref{eq:latent-confounders}).
We also construct another dataset, TD6L-iid, that uses 6 hidden layers but with the \emph{iid} latent confounder variable  $\mathbf{z}$, i.e., $\mathbf{z}_t \ind \mathbf{z}_s, \forall t,s$. Finally, we only keep $y_t$, $w_t$, $\mathbf{x}_t$, $\mathbf{s}_t$ as observed data for training. Due to limited space, details of the simulation is presented in Appendix.

\textbf{Benchmark dataset.}  
Infant Health and Development Program (IHDP) dataset \citep{Hill:2011} is a study on the impact of specialist visits on the cognitive development of children. This dataset has 747 entries, each with 25 covariates that describe the properties children. The treatment group consists of children who received specialist visits and a control group that includes children who did not. For each child, a treated and a control outcome were simulated from the numerical scheme of the NPCI library \citet{dorie2016npci}. 
\ifdefined\arXiv
\begin{table}[!ht]
	\caption{$\epsilon_\textrm{ATE}$ of each method on different datasets (lower is better).}
	\vskip 3pt
	\label{tab:mae}
	\centering
	\setlength{\tabcolsep}{7.8pt}
	\begin{tabular}{lccccc}
		
		\toprule
		\multicolumn{1}{c}{\multirow{2}{*}{Method}} & \multicolumn{3}{c}{\begin{tabular}[c]{@{}c@{}}Temporal Data (latent stochastic confounders)\end{tabular}} & \multicolumn{2}{c}{\begin{tabular}[c]{@{}c@{}}IID Data  (\emph{iid} confounders)\end{tabular}} \\ \cmidrule(lr){2-4}\cmidrule(lr){5-6}
		\multicolumn{1}{c}{}                                 & TD2L                   & TD4L                  & TD6L                  & TD6L-iid& IHDP                                                              \\ \cmidrule(lr){2-4}\cmidrule(lr){5-6}
		OurModel-RT         & $\bm{0.299 \pm 0.069}$ & $\bm{0.193 \pm 0.188}$ &  $\bm{0.237 \pm 0.056}$ & $0.412 \pm 0.116$ & $0.290\pm0.076 $       \\
		OurModel-NN2L      & $\bm{0.369 \pm 0.080}$ & $0.395 \pm 0.162$          & $0.396 \pm 0.169$          & $0.390 \pm 0.107$ & $ 4.057\pm0.109 $        \\
		OurModel-NN4L      & $0.413 \pm 0.092$          & $\bm{0.309 \pm 0.063}$ & $0.313 \pm 0.060$          & $0.398 \pm 0.104$ & $ 1.048\pm0.441 $        \\
		OurModel-NN6L      & $0.477 \pm 0.102$          & $0.325 \pm 0.069$          & $\bm{0.297 \pm 0.070}$ & $\bm{0.388 \pm 0.106}$ & $ 0.306\pm0.063 $        \\\cmidrule(lr){2-4}\cmidrule(lr){5-6}
		POTS-FC            & $ 1.477\pm0.220 $          & $ 1.243\pm 0.219$          & $ 1.180\pm0.177 $          & $0.470\pm 0.081$ & $ 0.529\pm0.183 $        \\
		POTS-LSTM            & $ 1.316\pm0.261 $          & $ 1.117\pm0.232 $          & $ 1.179\pm 0.323$          & $0.593\pm0.102 $ & $ 0.613\pm0.137 $        \\
		TARNets $^*$            & $0.780 \pm 0.131$          & $0.570 \pm 0.098$          & $0.701 \pm 0.131$          & $0.568 \pm 0.091$ & $ 0.424\pm0.100 $        \\
		CEVAE $^{**}$            & $1.166 \pm 0.247$          & $1.428 \pm 0.709$          & $0.705 \pm 0.179$          & $\bm{0.337 \pm 0.093}$ & $ \bm{0.232\pm 0.061}$        \\
		FCA            & $ 0.391\pm0.078 $          & $ 1.065\pm0.504 $          & $ 0.682\pm0.080 $          & $ 0.393\pm0.103 $ & $ \bm{0.261\pm0.063} $        \\ \bottomrule
	\end{tabular}
	\begin{tablenotes}\scriptsize
		\item $^*$ We used code from: https://github.com/clinicalml/cfrnet
		\item $^{**}$ We used code from: https://github.com/AMLab-Amsterdam/CEVAE
	\end{tablenotes}

	\vskip-3px
\end{table}
\else
\begin{table*}
	\caption{Out-of-sample ATE error ($\epsilon_\textrm{ATE}$) of each method on different datasets (lower is better). }
	\vskip -9pt
	\label{tab:mae}
	\centering
	\small
	\setlength{\tabcolsep}{4.5pt}
	\begin{tabular}{lccccc}
		
		\toprule
		\multicolumn{1}{c}{\multirow{2}{*}{Method}} & \multicolumn{3}{c}{\begin{tabular}[c]{@{}c@{}}Temporal Data (latent stochastic confounders)\end{tabular}} & \multicolumn{2}{c}{\begin{tabular}[c]{@{}c@{}}IID Data  (\emph{iid} confounders)\end{tabular}} \\ \cmidrule(lr){2-4}\cmidrule(lr){5-6}
		\multicolumn{1}{c}{}                                 & TD2L                   & TD4L                  & TD6L                  & TD6L-iid& IHDP                                                              \\ \cmidrule(lr){2-4}\cmidrule(lr){5-6}
		OurModel-Mat\'ern kernel         & \textbf{0.299$\pm$.069} & \textbf{0.193$\pm$.088} &  \textbf{0.237$\pm$.056} & 0.412$\pm$.116 & \textbf{0.290$\pm$.076}        \\
		OurModel-RBF kernel        & \textbf{0.341$\pm$.078} & \textbf{0.289$\pm$.025} &  \textbf{0.287$\pm$.067} & 0.397$\pm$.096 & 0.460$\pm$.130        \\
		OurModel-RQ kernel        & \textbf{0.311$\pm$.067} & \textbf{0.255$\pm$.020} &  \textbf{0.257$\pm$.063} & \textbf{0.342$\pm$.076} & 0.489$\pm$.170        \\

		\cmidrule(lr){2-4}\cmidrule(lr){5-6}
		OurModel-NN2L      & \textbf{0.369$\pm$.080} & 0.395$\pm$.162          & 0.396$\pm$.169          & 0.390$\pm$.107 &  4.057$\pm$.109         \\
		OurModel-NN4L      & 0.413$\pm$.092          & \textbf{0.309$\pm$.063} & 0.313$\pm$.060          & 0.398$\pm$.104 &  1.048$\pm$.441         \\
		OurModel-NN6L      & 0.477$\pm$.102          & 0.325$\pm$.069          & \textbf{0.297$\pm$.070} & \textbf{0.388$\pm$.106} &  0.306$\pm$.063         \\\cmidrule(lr){2-4}\cmidrule(lr){5-6}
		OurModel-iid confounder-Mat\'ern kernel         & 0.519$\pm$.072 & 0.658$\pm$.098 &  0.762$\pm$.068 & \textbf{0.311$\pm$.092} & \textbf{0.217$\pm$.095}        \\\cmidrule(lr){2-4}\cmidrule(lr){5-6}
		\citet{bojinov2019time} (FC)            &  1.477$\pm$.220           &  1.243$\pm$.219          &  1.180$\pm$.177           & 0.470$\pm$.081 &  0.529$\pm$.183         \\
		\citet{bojinov2019time} (LSTM)            &  1.316$\pm$.261           &  1.117$\pm$.232           &  1.179$\pm$.323          & 0.593$\pm$.102  &  0.613$\pm$.137         \\
		\citet{shalit2017estimating} (TARNets)            & 0.780$\pm$.131          & 0.570$\pm$.098          & 0.701$\pm$.131          & 0.568$\pm$.091 &  0.424$\pm$.100         \\
		\citet{louizos2017causal} (CEVAE)            & 1.166$\pm$.247          & 1.428$\pm$.709          & 0.705$\pm$.179          & \textbf{0.337$\pm$.093} &  \textbf{0.232$\pm$.061}        \\
		\citet{madras2019fairness}            &  0.391$\pm$.078           &  1.065$\pm$.504           &  0.682$\pm$.080           &  0.393$\pm$.103  &  \textbf{0.261$\pm$.063}         \\ \bottomrule
	\end{tabular}

	\vskip-9px
\end{table*}
\fi

\begin{figure}[!ht]
\centering
\includegraphics[width=0.47\textwidth]{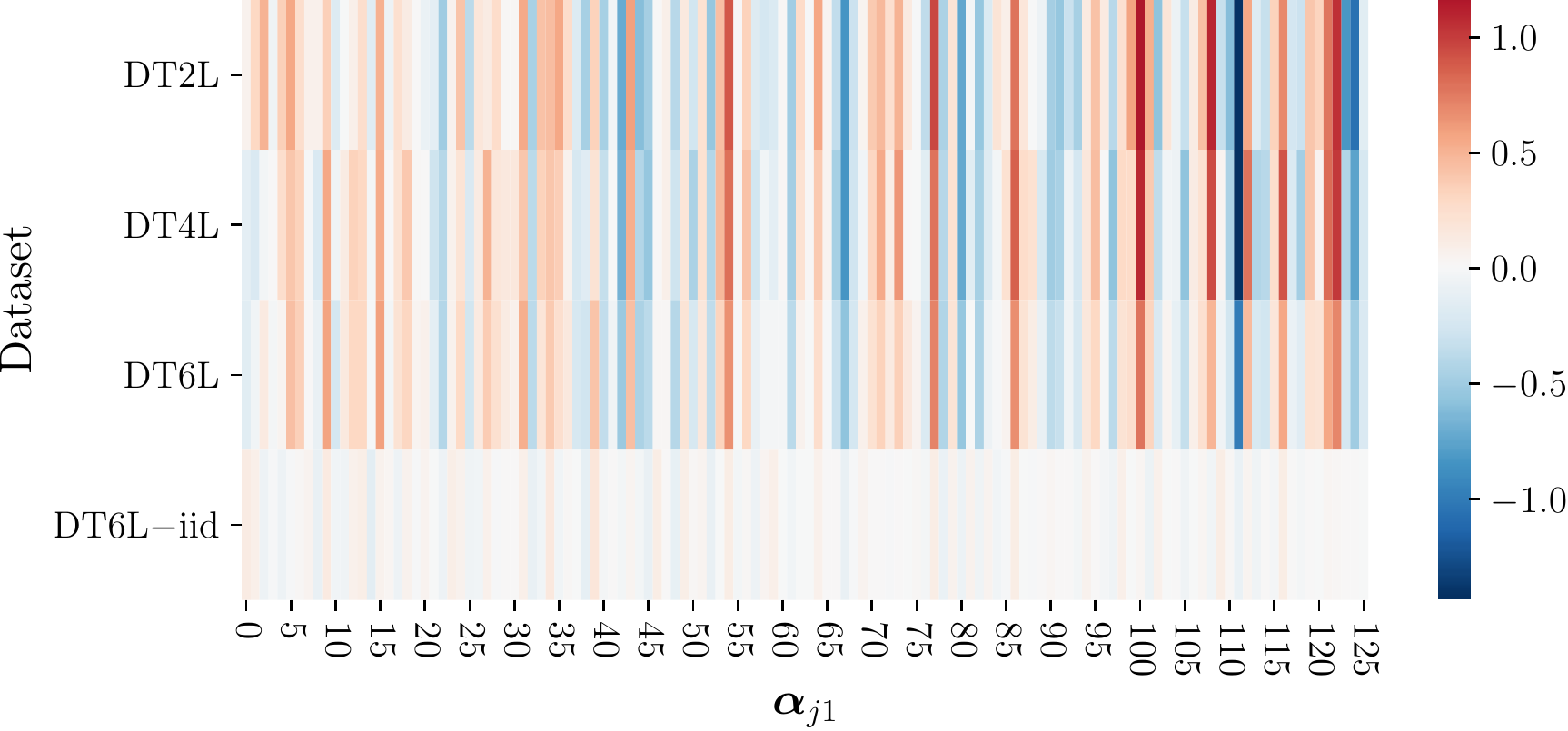}
\vskip -6pt
\caption{The heatmap of $\bm{\alpha}_{j1}$ on each dataset.}
\label{fig:weightz-heatmap}
\vskip -20pt
\end{figure}

\textbf{The results and discussions.} Each of the experimental datasets has 10 replications. For each replication, we use the first 64\% for training, the next 20\% for validation, and the last 16\% for testing. We examine three setups for our inference method, one with kernel method to model the nonlinear functions, another one with the neural networks, and the third one is an iid confounder setting with kernel method to model the nonlinear functions. Specifically, we denote `OurModel-Mat\'ern~kernel', `OurModel-RBF~kernel' and `OurModel-RQ~kernel' as our proposed method with representer theorem to model the nonlinear functions $f_i$ ($i \in \bm{\mathcal{A}}$) using Mat\'ern~kernel, radial basis function kernel, and rational quadratic function kernel, respectively. We denote OurModel-NN$j$L as our framework using neural networks to model these nonlinear functions, where $j \in \{2,4,6\}$ is the number of hidden layers. We also denote `OurModel-iid confounder-Mat\'ern kernel' as our proposed method with representer theorem and independent confounders.

Table~\ref{tab:mae} reports the error of each method, where significant results are highlighted in bold.
We observe that the performance  of  our  model is  competitive for the first three datasets since our framework is \emph{suited} and built for temporal data. This verifies the effectiveness of our proposed framework on the inference of the causal effect for the temporal data, especially with the latent stochastic confounders.
Moreover, the use of representer theorem returns \emph{similar values} of ATE on three different kernel functions. %
Additionally, our proposed method outperforms the other baselines on the first three datasets, this is because we consider the time-dependency in the latent confounders, while the others do not take into account such property. For datasets that respects \emph{iid} confounder, our methods give comparable results with the other baselines (the last five lines in Table~\ref{tab:mae}).
This can be explained as follows. In our setup, $\mathbf{z}_t$ has the following form: $\mathbf{z}_t = \bm{\alpha}_0 + \sum_{j=1}^T\sum_{l=1}^L\bm{\alpha}_{jl}\,k_z(\mathbf{z}_{t-1}, \mathbf{z}_j^l) + \bm{\epsilon}_t$, where $\bm{\alpha}_0$ is a bias vector, $\bm{\alpha}_{jl}$ is a weight vector, and $\bm{\epsilon}_t$ is the Gaussian noise (Please refer to Appendix for specific expressions of $\bm{\alpha}_0$ and $\bm{\alpha}_{jl}$). The learned weights $\bm{\alpha}_{jl}$ presented in Figure~\ref{fig:weightz-heatmap}  
(for $l=1$) 
shows that their quantities are around 0 on data with \textit{iid} confounders, which breaks the connection from $\mathbf{z}_{t-1}$ to $\mathbf{z}_t$ and thus makes these two variables independent to each other.
\begin{figure}
\centering
\includegraphics[width=0.4\textwidth]{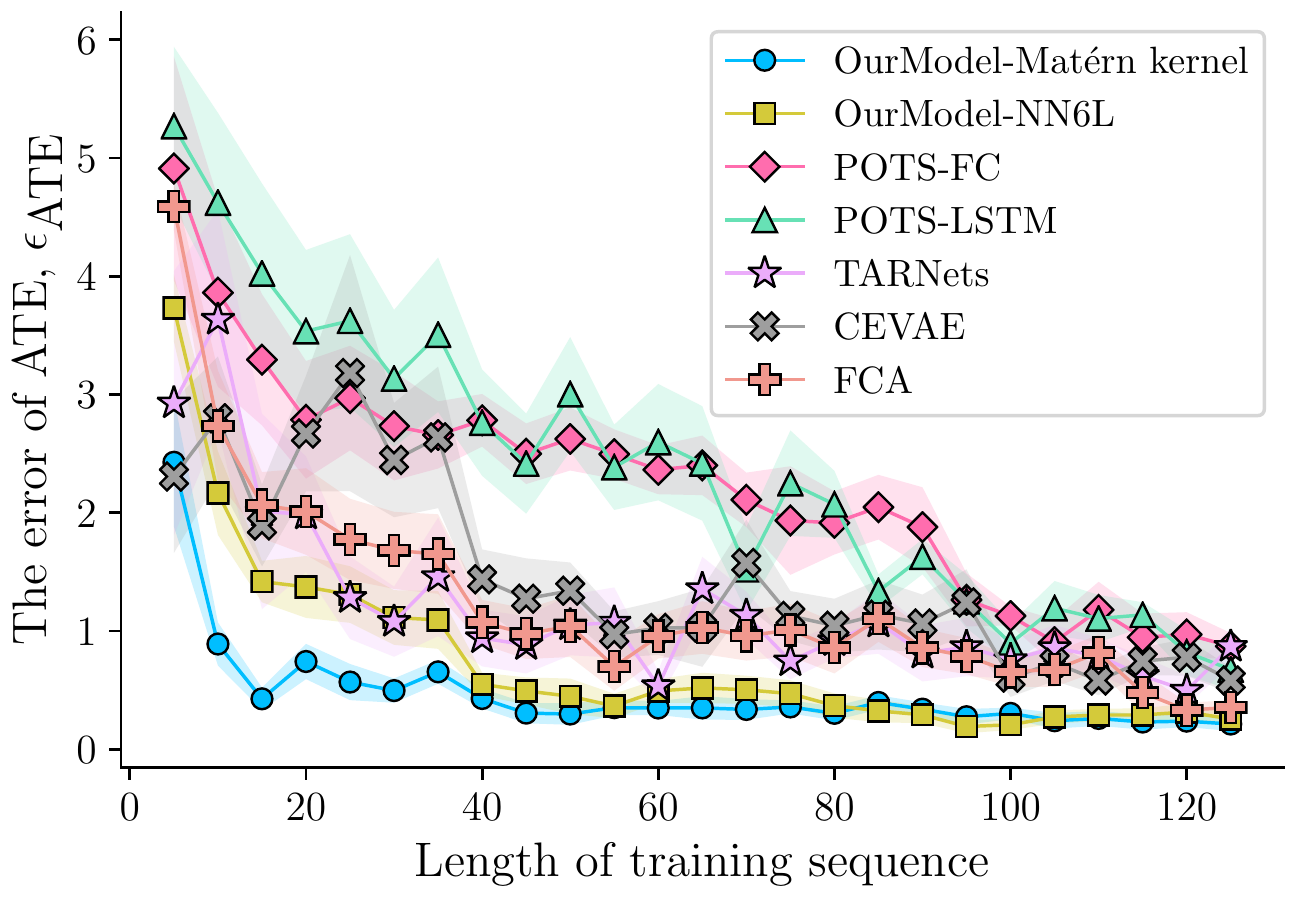}
\vskip -6pt
\caption{$\epsilon_\textrm{ATE}$ on different lengths of the training set.}\label{fig:synthetic-convergence}
\vskip -16pt
\end{figure}
Figure~\ref{fig:synthetic-convergence} presents the convergence of each method on the dataset TD6L, over different lengths of the training set $T_\text{train} \in \{5,10,\!...,125\}$. In general, the more training data we have, the smaller the error of the estimated ATE. The figure reveals that our method (blue line) starts to converge from around $T_{\text{train}} = 45$, which is faster than the others. %
Additionally, the estimated ATE of our method is stable with a small error bar.
\vspace{-0.2cm}

\subsection{Real Data: Gold--Oil Dataset}

Gold is one of the most transactable precious metals, and oil is one of the most transactable commodities. Rising oil price tends to generate higher inflation which strengthens the demand for gold and hence pushes up the gold price \citep{le2011oil,vsimakova2011analysis}. 
In this section, we examine the causal effects from the price of crude oil to that of gold. 
The dataset in this experiment consists of monthly prices of some commodities including gold, crude oil, beef, chicken, cocoa-beans, rice, shrimp, silver, sugar, gasoline, heating oil and natural-gas from May 1989 to May 2019. We consider the price of gold as the outcome $\mathbf{y}$, and the trend of crude oil's price as the treatment $\mathbf{w}$. Specifically, we cast an increase of crude oil's price to 1 ($w_t=1$) and a decrease of crude oil's price to 0 ($w_t=0$). We use the prices of gasoline, heating oil, natural-gas, beef, chicken, cocoa-beans, rice, shrimp, silver and sugar as proxy variables $\mathbf{x}$. %
We evaluate the effect path and ATE between two sequences of treatments $\mathbf{w}_1 = [1,1,\!...,1,1]^\top$ (increasing crude oil prices) and $\mathbf{w}_2 = [0,1,\!...,0,1]^\top$ (alternating decreasing and increasing crude oil prices, i.e., constant on average). In Figure~\ref{fig:ate-gold-oil},~Case~(a) presents the estimated ATE of the above two sequences of treatments.
The estimated ATE using our framework is $4.8$, which means that in a period the price of crude oil increases, the average gold price is about to increase $4.8$. 
This quantity is equivalent to an increase of $0.77\%$ in the gold price over the period ($4.8/\{(\sum_{t=1}^Ty_t^{obs})/T\}= 0.77\%$).
To validate the $0.77\%$ increase in gold price, we contrast and compare this with the results reported in \citet{vsimakova2011analysis} (based on Granger causality) that show the ``percentage increase in oil price leads to a $0.64\%$ increase in gold price''. We note that our results give similar order of magnitude and the slight difference may be attributed to our experimental data that is from May 1989 to May 2019, while the analysis in \citet{vsimakova2011analysis} is on the data from 1970 to 2010. 

\begin{figure}
\centering
    \includegraphics[width=0.43\textwidth]{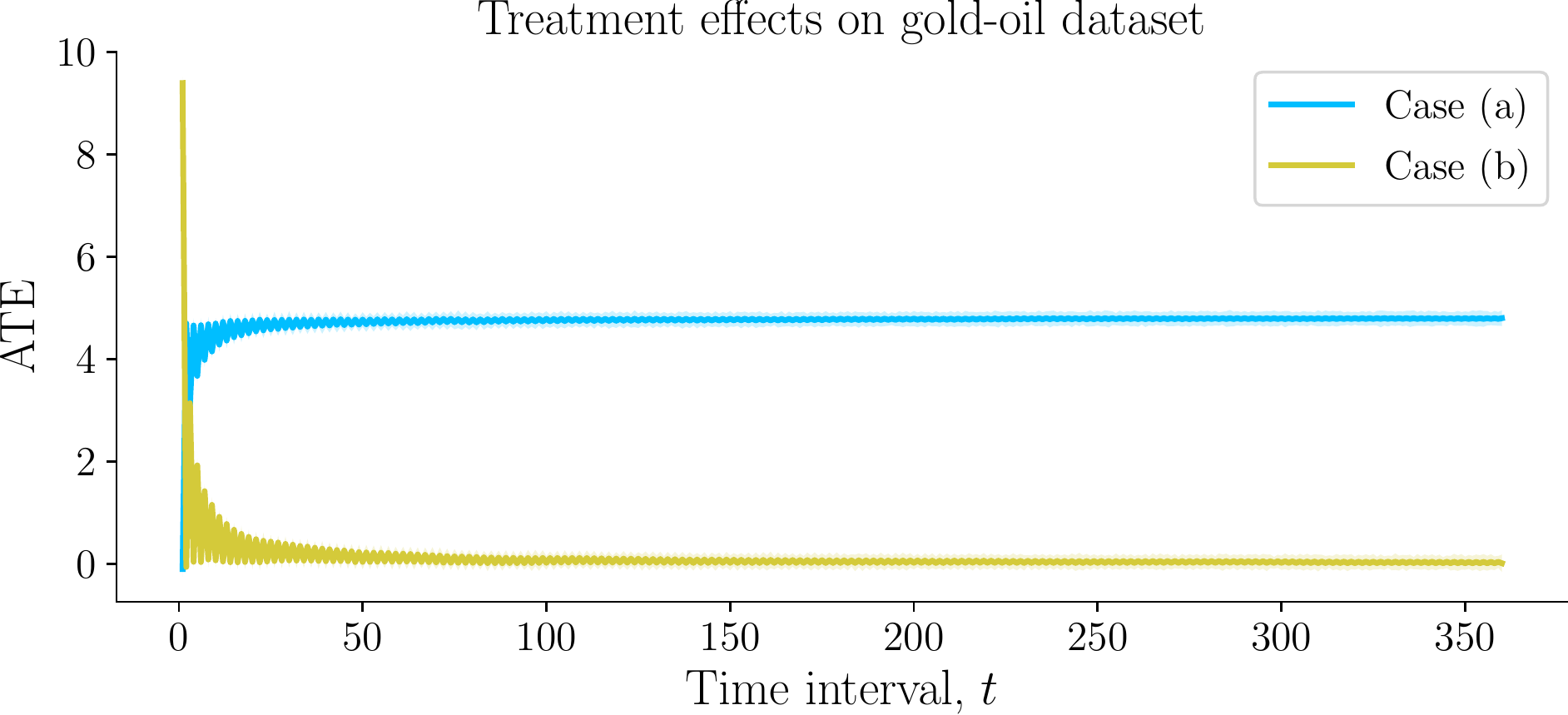}
    \vskip -6pt
    \caption{
        \emph{Case~}(\emph{a}): ATE between two treatment paths $[1,1,\!...,1,1]^\top$ and $[0,1,\!...,0,1]^\top$, \emph{Case~}(\emph{b}): ATE between two treatment paths $[1,0,\!...,1,0]^\top$ and $[0,1,\!...,0,1]^\top$.
    } \label{fig:ate-gold-oil}
    \vskip -12pt
\end{figure}

In Figure~\ref{fig:ate-gold-oil},~Case~(b), we present another experiment with $\mathbf{w}_1 = [1, 0,\!..., 1, 0]^\top$ and $\mathbf{w}_2 = [0, 1,\!..., 0, 1]^\top$. In this case, both treatment paths represent the alternating variation of crude oil prices. Specifically, the former increases first and then decreases, while the latter is on the opposite. The average treatment effect is expected to be around 0.
From Figure~\ref{fig:ate-gold-oil},~Case~(b), the estimated ATE of our method is 0.0045, which is in line with the expectation. 
To check on statistical significance, we performed a one group $t$-test on the EP (Definition~\ref{def:ep}) with the population mean to be tested is 0. The $p$-value given by the $t$-test is 0.9931, which overwhelmingly fail to reject the null hypothesis that the ATE equals 0. This again verifies that our method works sensibly.

\section{CONCLUSION}

We have developed a causal modeling framework that admits confounders as random processes, generalizing recent work where the confounders are assumed to be independent and identically distributed. We study the causal effects over time using variational inference in conjunction with an alternative form of the Representer Theorem with a random input space. 
Our algorithm supports causal inference from the observed outcomes, treatments, and covariates, without parametric specification of the components and their relations. This property is important for capturing real-life causal effects in SCM, where non-linear functions are typically placed in the priors. Our setup admits non-linear functions modulating the SCM with estimated parameters that have analytical solutions. This approach compares favorably to recent techniques that model similar non-linear functions to estimate the causal effects with neural networks, which usually involve extensive model tuning and architecture building. One limitation of our framework is that the fixed amount of passing time (time-lag) is set to unity for our purpose as it leads to further simplifications in the course of computing causal effects. Of practical interest is to perform a more detailed empirical study for general time-lag.%

 \bibliographystyle{apalike}
\bibliography{ref}

\includepdf[pages=-]{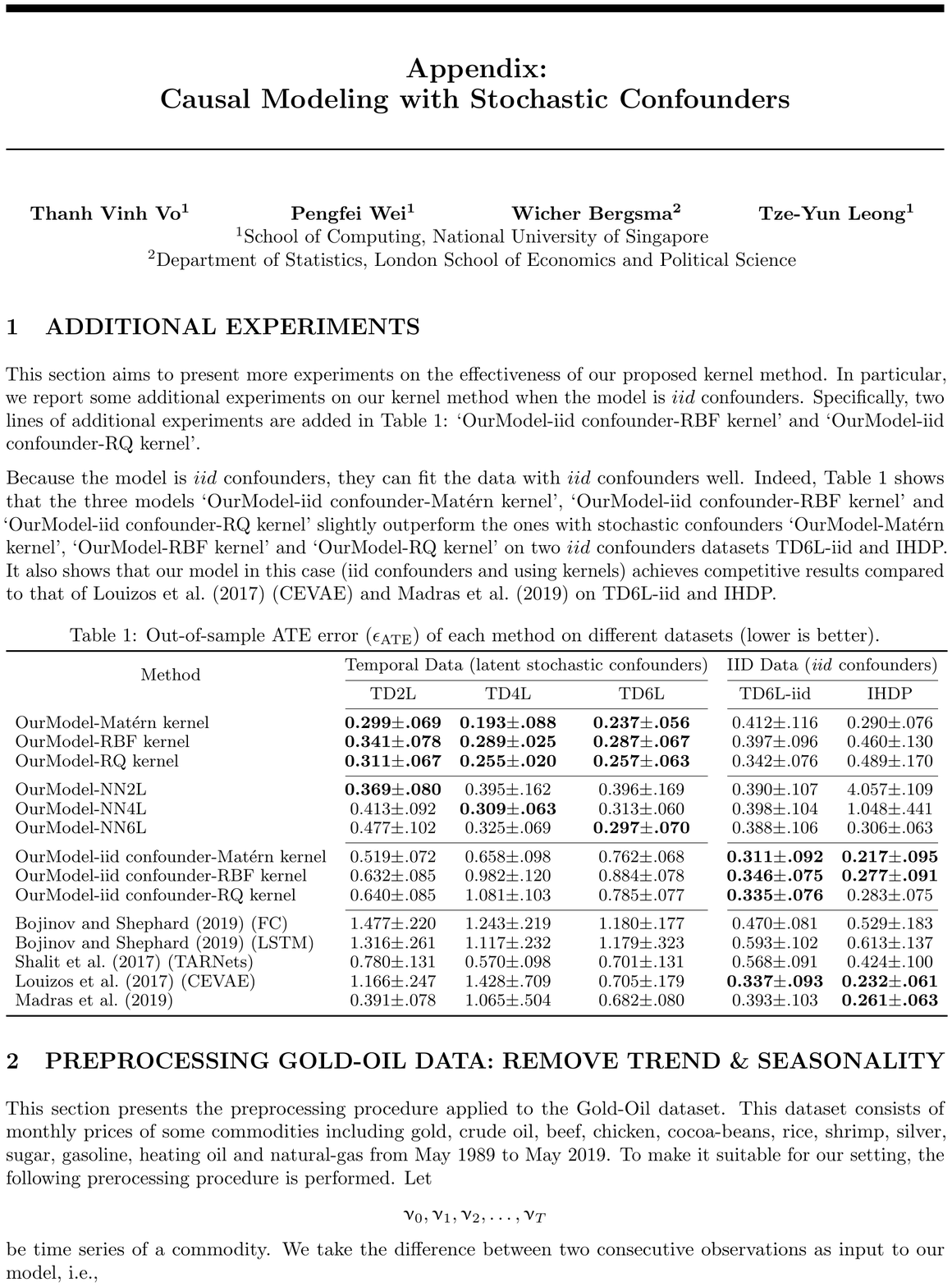}

\end{document}